\documentclass{article} 
\usepackage{iclr2027_conference,times}

\usepackage{amsmath,amsfonts,bm}

\def\1{\bm{1}}

\def\vtheta{{\bm{\theta}}}

\def\vx{{\bm{x}}}
\def\vy{{\bm{y}}}

\def\mI{{\bm{I}}}

\def\mK{{\bm{K}}}

\def\mX{{\bm{X}}}

\DeclareMathAlphabet{\mathsfit}{\encodingdefault}{\sfdefault}{m}{sl}
\SetMathAlphabet{\mathsfit}{bold}{\encodingdefault}{\sfdefault}{bx}{n}

\def\gD{{\mathcal{D}}}
\def\gE{{\mathcal{E}}}

\def\gN{{\mathcal{N}}}

\def\gX{{\mathcal{X}}}

\newcommand{\E}{\mathbb{E}}

\newcommand{\R}{\mathbb{R}}

\DeclareMathOperator*{\argmax}{arg\,max}

\newcommand{\D}{\mathcal{D}}
\newcommand{\X}{\mathcal{X}}
\newcommand{\xs}{\vx_*}

\usepackage{hyperref}
\usepackage{url}

\title{Work While They Sleep: Exploiting Evaluation Latency for Fully Bayesian Optimization}

\iclrfinalcopy

\author{\textbf{Gustavo Sutter}$^{1,2}$\thanks{Equal contribution.} \quad
\textbf{Alejandro Comas-Leon}$^{1,2}$\footnotemark[1] \quad
\textbf{David Holzmüller}$^{3}$\quad
\textbf{Hao Wang}$^{4}$ \\
\textbf{Luis Ricardez-Sandoval}$^{4,5}$ \quad
\textbf{Pascal Poupart}$^{1,2}$ \quad
\textbf{Agustinus Kristiadi}$^{2,6}$\\ [4pt]
$^1$Cheriton School of Computer Science, University of Waterloo, Waterloo, ON, Canada \\
$^2$Vector Institute, Toronto, ON, Canada  \\
$^3$SODA Team, INRIA Saclay, Palaiseau, France \\
$^4$Department of Chemical Engineering, University of Waterloo, Waterloo, ON, Canada \\
$^5$Waterloo Institute for Nanotechnology, University of Waterloo, Waterloo, ON, Canada \\
$^6$Department of Computer Science, Western University, London, ON, Canada
}

\usepackage{graphicx} 
\usepackage{xcolor}
\usepackage[capitalise]{cleveref}
\usepackage{algorithm}
\usepackage{algpseudocode}
\usepackage{wrapfig}
\usepackage{tikz}
\usepackage{subcaption}

\definecolor{mydarkblue}{rgb}{0,0.08,0.45}
\hypersetup{
  colorlinks=true,
  citecolor=mydarkblue,
  linkcolor=mydarkblue,
  urlcolor=mydarkblue
}

\usepackage{amsthm}
\usepackage{thm-restate}

\theoremstyle{plain}

\theoremstyle{definition}

\theoremstyle{remark}

\newcommand{\mut}{\mu^{(\vtheta)}}
\newcommand{\kt}{k^{(\vtheta)}}
\newcommand{\kappat}{\mathbf{\kappa}^{(\vtheta)}}

\definecolor{tabblue}{HTML}{1f77b4}
\definecolor{tabred}{HTML}{d62728}
\definecolor{tabgreen}{HTML}{2ca02c}
\definecolor{tabgray}{HTML}{7f7f7f}

\newcommand{\random}{\textcolor{tabgray}{Random}}
\newcommand{\map}{\textcolor{tabblue}{MAP}}
\newcommand{\mcmc}{\textcolor{tabgreen}{Standard FBO}}
\newcommand{\elfbo}{\textcolor{tabred}{ELF-BO}}

\definecolor{idleblue}{HTML}{1F5FA9}

\begin{document}

\maketitle

\begin{abstract}
Black-box optimization problems are ubiquitous across science and engineering, often dealing with expensive objective functions.
This objective latency has two consequences during optimization: (i) the objective evaluation dominates execution time, and (ii) sample-efficient algorithms are crucial to accelerate development and avoid wasting resources.
Bayesian optimization (BO) methods are the \textit{de facto} choice of planners for suggesting the next point to try.
Standard BO fits the surrogate model's hyperparameters with a point estimate.
Alternatively, a fully Bayesian approach uses model averaging to account for uncertainty over the hyperparameters, leading to better uncertainty estimates---useful in the low-data regime that is pervasive in BO.
However, it is often prohibitively expensive and thus rarely used.
In this work, we propose ELF-BO, an algorithm that uses the objective evaluation latency to headstart the computation of the next suggestion, allowing for fully Bayesian optimization without incurring substantial decision-time costs.
This is done by sampling from the hyperparameter posterior \emph{while} the objective is being evaluated, only requiring reweighting of the samples once the objective value is observed.
Across synthetic functions and real-world applications, we show that ELF-BO matches the performance of fully Bayesian methods while only incurring decision latency on par with or better than standard BO.
Thus, ELF-BO makes fully Bayesian optimization practical in real-world use cases.
\end{abstract}

\section{Introduction}

Bayesian optimization~\citep[BO;][]{Mockus1975} often acts as a planner (suggesting the next query point to evaluate) for optimizing black-box functions commonly encountered in science \citep{greenaway2023alchemist, Muthyala2026bodenovo} and engineering~\citep{lam2018boaerospace, ament2023sustainable}.
Across domains, most applications of BO deal with expensive objective functions, a property that motivates the development of sample-efficient methods.
The inherent high evaluation latency makes it the main bottleneck in the optimization run.
Meanwhile, after suggesting a query location, the planner waits idly for the evaluation results before continuing.
Crucially, expensive function evaluations also limit applications to cases with restricted data availability, both in initial dataset size and in the number of evaluations allowed in the optimization loop \citep{Jones1998EfficientGO}.
The former increases the difficulty of achieving the latter, as a misspecified model can lead to suboptimal queries that waste precious evaluation budget.

Standard BO methods utilize Gaussian processes~\citep[GP;][]{Rasmussen_Williams_gp_2006} as surrogate models and use a point estimate approach---either maximum likelihood estimation (MLE) or maximum a posteriori (MAP)---to find the GP's hyperparameters (e.g., lengthscale, output scale, deep-kernel parameters, and noise variance).
However, relying on a single estimate of the hyperparameters can drastically underestimate the uncertainty over the true hyperparameters, and therefore the uncertainty over the objective values \citep{death2021howbayesian}.
Importantly, this is a bigger problem in the low-data regimes where well-calibrated models are required for sample-efficient decision-making.
The same trade-off arises beyond GPs: the high computational overhead of Bayesian neural networks~\citep[BNNs;][]{li2024study,kristiadi_sober_2024} makes them similarly expensive at decision time.

\begin{figure}[t]
\centering
\definecolor{dark2gray}{HTML}{666666}
\definecolor{dark2pink}{HTML}{FF847C}
\definecolor{dark2green}{HTML}{1B9E77}
\definecolor{dark2blue}{HTML}{1F5FA9}

\tikzset{
    task/.style={draw, minimum height=0.6cm},
    worker/.style={font=\small},
    swatch/.style={draw, minimum width=0.35cm, minimum height=0.25cm, inner sep=0pt},
    type1/.style={fill=dark2gray},
    type2/.style={fill=dark2blue},
    type3/.style={fill=dark2pink},
    type4/.style={fill=dark2green}
}
\newcommand{\lbox}[1]{\tikz[]{\node[swatch,#1]{};}}

\resizebox{0.99\textwidth}{!}{%
\begin{tikzpicture}[thick]

\newcommand{\schedule}[5][7]{%
  \begin{scope}[xshift=#3]
    \edef\tasklist{#5}%
    \foreach \name [count=\nw] in {#4} {\xdef\nworkers{\nw}}%
    \foreach \name [count=\w] in {#4} {
      \pgfmathsetmacro{\y}{\nworkers-\w+0.5}
      \draw[gray, thin] (0,\y) -- (#1,\y);
      \node[anchor=east, inner sep=2pt] (lell) at (0,\y) {$\cdots$};
      \node[anchor=west, inner sep=2pt]        at (#1,\y) {$\cdots$};
      \node[worker, anchor=east] at ([xshift=-0.1cm]lell.west) {\name};
    }
    \pgfmathsetmacro{\half}{#1/2}
    \draw[->] ({#1-1},-0.1) node[worker, left] {Time} -- (#1,-0.1);
    \node[font=\bfseries] at (\half,\nworkers+0.5) {#2};
    \foreach \wrk/\st/\en/\ty in \tasklist {
      \pgfmathsetmacro{\y}{\nworkers-\wrk+0.5}
      \draw[task, type\ty] (\st,\y-0.15) rectangle (\en,\y+0.15);
    }
  \end{scope}%
}

               

\def\scheduleA{1/0/1.5/1, 2/1.5/2.5/2, 2/2.5/2.7/3, 
               1/2.7/4.2/1, 2/4.2/5.2/2, 2/5.2/5.4/3, 
               1/5.4/7/1}
               
\def\scheduleB{1/0/1.5/1, 2/0/1/2, 2/1.5/1.6/4, 2/1.6/1.8/3,
               1/1.8/3.3/1, 2/1.8/2.8/2, 2/3.3/3.4/4, 2/3.4/3.6/3,
               1/3.6/5.1/1, 2/3.6/4.6/2, 2/5.1/5.2/4, 2/5.2/5.4/3,
               1/5.4/7/1, 2/5.4/6.4/2}

\schedule{Standard MCMC-Based BO}{0cm}{Worker, Planner}{\scheduleA}
\schedule{ELF-BO (Ours)}{10cm}{Worker, Planner}{\scheduleB}

\matrix[anchor=north, ampersand replacement=\&, column sep=0.15cm,
        nodes={anchor=center, thin},
        legendtext/.style={font=\small, text height=1.5ex, text depth=0.25ex, inner ysep=0pt}]
  at ([yshift=-0.15cm]current bounding box.south) {
    \node[swatch, type1] {}; \& \node[legendtext] {Function evaluation};      \&[0.8cm]
    \node[swatch, type2] {}; \& \node[legendtext] {MCMC sampling};            \&[0.8cm]
    \node[swatch, type3] {}; \& \node[legendtext] {Acquisition optimization}; \&[0.8cm]
    \node[swatch, type4] {}; \& \node[legendtext] {Sample reweighting};       \\
};

\end{tikzpicture}%
}
\caption{
\textbf{Where the time goes in a sequential BO campaign.}
Each timeline shows the \emph{worker} (the experiment or simulator evaluating $f$) and the \emph{planner} (the BO algorithm); thick segments denote busy time.
\textbf{Left:} In standard fully Bayesian BO, the two alternate: the planner waits while $f$ is evaluated, and the worker waits while the planner runs MCMC and optimizes the acquisition function.
\textbf{Right:} ELF-BO runs MCMC \emph{while} $f$ is being evaluated, so once $y_t$ arrives, only a cheap importance reweighting and the acquisition optimization remain, greatly reducing decision latency.
}
\label{fig:schedules}
\end{figure}

An alternative to point estimates is the use of a fully Bayesian optimization \citep{osborne2009gaussian,benassi2011robust}, in which the hyperparameters are averaged out according to their posterior distribution.
In practice, this is done by using Markov chain Monte Carlo~\citep[MCMC;][]{neal2011hmc} methods to obtain samples from the posterior distribution and computing the average of the acquisition function over the samples.
As usual in Bayesian inference, the price is paid with more computation and time, mostly due to the expensive MCMC sampling, rendering fully Bayesian methods prohibitively expensive for many practitioners.
This expensive decision-making step between objective function evaluations is undesirable in various applications; e.g., in automated materials discovery, machine time is valuable, so minimal downtime is desired ~\citep{MacLeod2020selfdrivinglab}.

In this work, we show that most of the expensive steps needed in a fully Bayesian approach can be performed while the expensive objective function is being evaluated.
We propose \textbf{E}valuation \textbf{L}atency \textbf{F}ully \textbf{B}ayesian \textbf{O}ptimization (\textbf{ELF-BO}), which uses the BO planner's otherwise idle time to reduce the footprint of fully Bayesian optimization after the evaluation result becomes available---see \cref{fig:schedules}.
This is accomplished by showing that an expectation over the next hyperparameter posterior can be computed using a weighted expectation with the posterior of the hyperparameters at the current step.
These weights are proportional to the posterior predictive density of the pending evaluation results---thus only the weights are computed once the evaluation is done.
Since this identity relies only on Bayes' rule, ELF-BO is not tied to GPs and can be applied to BNNs. 
In this work, however, we focus on GPs due to their prevalence and simplicity in BO.

We evaluate ELF-BO on 15 standard benchmarks ranging from 2 to 10 dimensions and on three real-world experimental datasets.
Across these problems, ELF-BO matches the optimization performance of na\"{i}ve MCMC-based fully Bayesian BO after every observation, while its decision latency---the time the experiment sits idle waiting for the next query---stays close to that of point-estimate BO.
The latter amounts to a decision latency roughly 10$\times$ lower than standard fully Bayesian BO with a Matérn kernel, 70--110$\times$ lower with a deep kernel, and more than two orders of magnitude lower as the dataset grows.
We further use the effective sample size of the importance weights to diagnose when reweighting is reliable, and characterize the regimes in which ELF-BO yields the largest gains.
Altogether, ELF-BO makes MCMC-based fully Bayesian BO practical in real-world scenarios---the more expensive the function evaluation, the more calibrated the planner will be, without high compute overhead.
\section{Preliminaries}

We consider the maximization problem \(\xs \in \argmax_{\vx \in \X}f(\vx)\) where \(f:\X \to \R\) is an expensive black-box objective function that can only be accessed via noisy observations \(y_i=f(\vx_i) + \epsilon_i\), with \(\epsilon_i \sim \mathcal{N}(0,\sigma_n^2)\).
Bayesian optimization (BO) provides a principled solution to this problem grounded in Bayesian decision theory.
The key components of BO are: a surrogate model \(p(f \mid \D_t)\) and an acquisition function \(\alpha(\vx;\D_t)\), where \(\D_t = \{(\vx_i, y_i)\}_{i=1}^t\) is the set of observations of the objective function up to the current time step $t \in \{ 1, \dots, T \}$.
The surrogate models the current belief over the objective function given the data observed so far, which the acquisition function then uses to score the value of querying each point next.
The next query is chosen as
\(\vx_{t+1} \in \argmax_{\vx \in \X} \alpha(\vx ; \D_t)\), the objective is
evaluated at \(\vx_{t+1}\), and the procedure repeats until the evaluation
budget is exhausted.

\subsection{Gaussian Processes}
\label{sec:gp}

The standard surrogate choice for BO is the Gaussian process.
A Gaussian process \(\mathcal{GP}(\mut, \kt)\) is a distribution over functions fully characterized by its prior mean function \(\mut:\gX \to \R\) and its covariance function \(\kt:\gX \times \gX \to \R\),
where \(\vtheta\) collects all hyperparameters of the mean and covariance functions, together with the observation noise variance \(\sigma_n^2\).

Conditioned on an observation dataset \(\D_t = \{(\vx_i, y_i)\}_{i=1}^t\) and on the hyperparameters \(\vtheta\), the posterior distribution over function values is also a GP, with posterior mean and covariance given by
\begin{align}
  \mut_t(\vx) &= \mut(\vx) + \kappat(\vx)^\top (\mK^{(\vtheta)} + \sigma_n^2\mI)^{-1}(\vy_t - \mut(\mX_t))\\
  \kt_t(\vx, \vx') &= \kt(\vx, \vx') - \kappat(\vx)^\top (\mK^{(\vtheta)} + \sigma_n^2\mI)^{-1}\kappat(\vx'),
\end{align}
where \(\mK^{(\vtheta)} = [\kt(\vx_i, \vx_j)]_{i=1,j=1}^{t,t} \in \R^{t \times t}\) is the observation covariance matrix, \(\kappat(\vx) = [\kt(\vx, \vx_i)]_{i=1}^{t} \in \R^t\) is the vector of cross-covariances, \(\vy_t = [y_1,\dots,y_t] \in \R^t\) collects the observed objective values, and \(\mX_t\) collects all observed points.
Defining \(\sigma^{2(\vtheta)}_t(\vx) := \kt_t(\vx, \vx)\ \), the marginal posterior at any point \(\vx\) is \(p(f(\vx) \mid \D_t, \vtheta) = \gN(f(\vx) \mid \mut_t(\vx),
\sigma_t^{2(\vtheta)}(\vx))\).
We emphasize that these expressions are conditional on \(\vtheta\); recovering
the surrogate \(p(f \mid \D_t)\) introduced above requires marginalizing the
hyperparameters out, which we discuss in
Section~\ref{sec:fully-bayesian}.

\subsection{Model Averaging}
\label{sec:fully-bayesian}

The posterior in \Cref{sec:gp} is conditional on a particular setting of hyperparameters \(\vtheta\); thus, following the prescription of Bayesian inference, we should marginalize out our uncertainty over it, calculating \(p(f \mid \D_t) = \int p(f \mid \D_t, \vtheta) p(\vtheta \mid \D_t) d\vtheta\).
The problem, however, is that such an integral is intractable.
The hyperparameter posterior \(p(\vtheta \mid \D_t) \propto p(\D_t \mid \vtheta)\, p(\vtheta)\), given a prior \(p(\vtheta)\) and the GP marginal likelihood, is not Gaussian, since \(\vtheta\) enters the likelihood nonlinearly through the kernel matrix, its inverse, and its determinant; moreover, its normalizing constant has no closed form.

As a consequence, practitioners rely on a point estimate \(\smash{\hat{\vtheta}}\) obtained via maximum likelihood or maximum a posteriori.
Such practice is the standard in BO, and is equivalent to approximating \(p(\vtheta \mid \D_t)\) with \(\delta(\vtheta - \smash{\hat{\vtheta}})\).
Unfortunately, this discards all uncertainty about \(\vtheta\), which is especially bad in low-data regimes prevalent in BO.

Monte Carlo (MC) methods offer a solution to this problem, allowing us to approximate the intractable integral as \(p(f \mid \D_t) \approx 1/M \sum_{m=1}^{M} \mathcal{GP}(f \mid \mu_t^{(\vtheta_m)}, k_t^{(\vtheta_m)})\),  where \(\{\vtheta^{(m)}\}_{m=1}^{M} \sim p(\vtheta \mid \D_t)\).
Different Markov chain Monte Carlo (MCMC) algorithms can be used to generate the samples, such as slice sampling~\citep{neal2003slice, murray2010slice} or Hamiltonian Monte Carlo~\citep{duane1987hmc, neal2011hmc} and its adaptive variant NUTS~\citep{hoffman2014nuts}.

Under this mixture surrogate, the marginal acquisition function becomes the corresponding mixture of per-sample acquisitions,
\begin{equation}
  \alpha(\vx; \D_t) = \E_{p(\vtheta \mid \D_t)}[\alpha(\vx; \D_t, \vtheta)] \approx \frac{1}{M}\sum_{m=1}^{M} \alpha(\vx; \D_t, \vtheta^{(m)}),
\end{equation}
provided $\alpha$ is linear in the predictive distribution, so that the marginalization over $\bm\theta$ and the expectation over function values commute.
This holds for expected improvement~\citep[EI;][]{Jones1998EfficientGO}; for its numerically stable variant Log-EI~\citep{ament_unexpected_2024}, which we use in this work, we average in EI space and take the logarithm of the result.
Other acquisition functions can also be applied but may not reduce to a mixture of per-sample acquisitions; for instance, upper confidence bound ~\citep{auer2002ucb,srinivas2010gpucb} is instead computed from the mean and variance of the mixture.

\begin{figure}[!t]
  \centering
  \includegraphics[width=\linewidth]{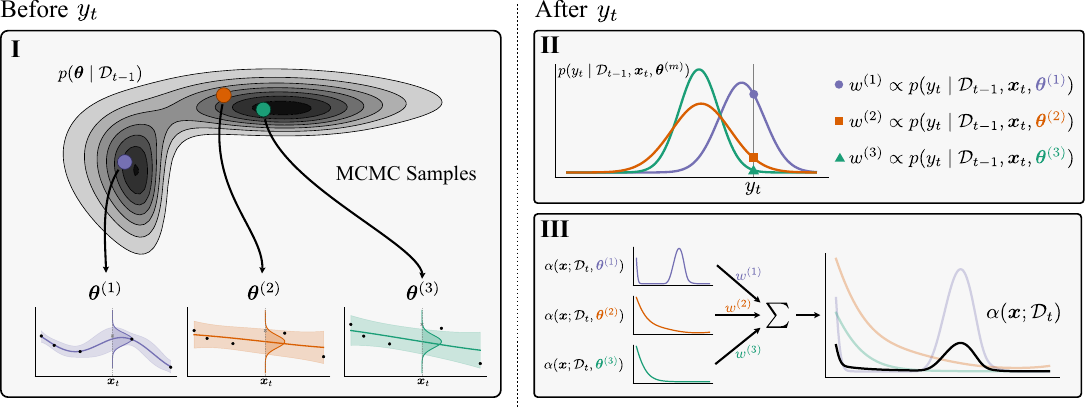}
  \caption{\textbf{Overview of \textsc{ELF-BO}.} Fully Bayesian BO marginalizes the acquisition function over the GP hyperparameters \(\vtheta\), which requires MCMC after every new observation. We instead run it while the objective is still being evaluated. \emph{Left:} \textbf{(I)} with the candidate \(\vx_t\) dispatched and its value \(y_t\) still pending, we sample \(\{\vtheta^{(m)}\}\) from the posterior given the data collected so far, \(p(\vtheta \mid \D_{t-1})\), which is already available. Each sample gives a different GP fit, and hence a different posterior at \(\vx_t\). \emph{Right:} \textbf{(II)} when \(y_t\) arrives, each sample is reweighted by how well it predicted it, \(w^{(m)} \propto p(y_t \mid \D_{t-1}, \vx_t, \vtheta^{(m)})\), and \textbf{(III)} the per-sample acquisition functions \(\alpha(\vx; \D_t, \vtheta^{(m)})\) are combined into the marginal acquisition \(\alpha(\vx; \D_t)\) used to pick the next candidate.}
  \label{fig:viz_abstract}
\end{figure}

\section{Evaluation Latency Fully Bayesian Optimization}
\label{sec:elf}

Consider a dataset \(\D_{t-1}\) has been observed, a point \(\vx_t\) was just dispatched for evaluation, and \(y_t\) is pending.
Our goal is to start the computations to generate the suggestion as follows:
\begin{equation}
  \label{eq:acqf}
  \vx_{t+1} = \argmax_{\vx \in \gX} \E_{\vtheta \sim p(\vtheta \mid \D_t)}[\alpha(\vx;\D_t, \vtheta)],
\end{equation}
where \(\D_t = \D_{t-1} \cup \{(\vx_t, y_t)\}\) and \(\vtheta\) are the GP hyperparameters.
The obvious challenge is that \(y_t\), and consequently \(\D_t\), are not known yet, so we would have to wait for it to run MCMC to obtain samples \(\{\vtheta^{(m)}\}_{m=1}^M\) needed to compute the acquisition.

The central idea of ELF-BO is that we can use the objective evaluation time to obtain samples \(\{\vtheta^{(m)}\}_{m=1}^M\) from the posterior of the previous step \(p(\vtheta \mid \D_{t-1})\) and use a reweighted average to compute \eqref{eq:acqf}, as illustrated in \cref{fig:viz_abstract}.
The following proposition justifies this (proof in Appendix~\ref{apdx:proofs}):

\vspace{0.5em}

\begin{restatable}{proposition}{thmone}
  \label{prop:w}
  Let \(\varphi\) be a function of \(\vtheta\) and \(p(\vtheta \mid \D_t)\) be the posterior of hyperparameters at step \(t \in \{1, \dots, T\}\).
  Suppose that, for each $t \geq 1$, $p(\vtheta \mid \D_t)$ is absolutely continuous w.r.t.\ $p(\vtheta \mid \D_{t-1})$.
  Then, for all $t = 1, \dots, T$,
  \begin{equation*}
    \E_{\vtheta \sim p(\vtheta \mid \D_t)}[\varphi(\vtheta)] = \E_{\vtheta \sim p(\vtheta \mid \D_{t-1})}[w(\vtheta) \, \varphi(\vtheta)],
    \qquad \text{where} \quad
    w(\vtheta) = \frac{p(y_t \mid \D_{t-1}, \vx_t, \vtheta)}{p(y_t \mid \D_{t-1}, \vx_t)}.
  \end{equation*}
\end{restatable}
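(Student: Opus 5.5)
The plan is to write the new posterior as the old posterior reweighted by the likelihood of the new observation (a Bayes'-rule update), and then recognize the right-hand side as an importance-sampling identity. Fix $t \ge 1$.

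\textbf{Step 1: sequential Bayes update.} Write $\D_t = \D_{t-1} \cup \{(\vx_t, y_t)\}$ and treat the query location $\vx_t$ as given, since it is chosen as a function of $\D_{t-1}$ and so carries no information about $\vtheta$ beyond $\D_{t-1}$. Bayes' rule applied to the single new observation gives
\begin{equation*}
  p(\vtheta \mid \D_t) = \frac{p(y_t \mid \D_{t-1}, \vx_t, \vtheta)\, p(\vtheta \mid \D_{t-1})}{p(y_t \mid \D_{t-1}, \vx_t)},
\end{equation*}
where the evidence is $p(y_t \mid \D_{t-1}, \vx_t) = \int p(y_t \mid \D_{t-1}, \vx_t, \vtheta)\, p(\vtheta \mid \D_{t-1})\, d\vtheta$. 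I would check this against the batch form $p(\vtheta \mid \D_t) \propto p(\vtheta)\, p(\vy_t \mid \mX_t, \vtheta)$. The chain rule for the GP marginal likelihood factorizes $p(\vy_t \mid \mX_t, \vtheta)$ as $p(\vy_{t-1} \mid \mX_{t-1}, \vtheta)$ times the one-step predictive $p(y_t \mid \D_{t-1}, \vx_t, \vtheta)$. The latter is the Gaussian $\gN(y_t \mid \mu^{(\vtheta)}_{t-1}(\vx_t), \sigma^{2(\vtheta)}_{t-1}(\vx_t) + \sigma_n^2)$ built from the expressions in \Cref{sec:gp}. With this factorization the two forms agree.

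\textbf{Step 2: identify the weight as a Radon--Nikodym derivative.} Step 1 shows that $w(\vtheta)$, as defined in the proposition, is exactly the density ratio $p(\vtheta \mid \D_t)/p(\vtheta \mid \D_{t-1})$. More carefully, the absolute continuity hypothesis guarantees that the Radon--Nikodym derivative $dP_t/dP_{t-1}$ of the new posterior measure with respect to the old one exists. Step 1 identifies it $P_{t-1}$-almost everywhere with $w$. This also handles the set where $p(\vtheta \mid \D_{t-1}) = 0$: there the new posterior has zero mass as well, so those points contribute nothing to either side.

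\textbf{Step 3: change of measure.} We then compute
\begin{equation*}
  \E_{p(\vtheta \mid \D_t)}[\varphi] = \int \varphi\, dP_t = \int \varphi\, \frac{dP_t}{dP_{t-1}}\, dP_{t-1} = \E_{p(\vtheta \mid \D_{t-1})}[w\,\varphi].
\end{equation*}
This holds whenever $\varphi$ is $P_t$-integrable, a condition I would add implicitly (or assume $\varphi \ge 0$, as for the EI integrand). Since $t$ was arbitrary, the identity holds for all $t = 1, \dots, T$.

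\textbf{Main obstacle.} The only delicate point is Step 1's justification that conditioning on $\vx_t$ is harmless and that the evidence is positive and finite. Positivity holds because the Gaussian predictive density is strictly positive. Finiteness holds because $\sigma_n^2 > 0$ bounds the Gaussian density, so the integral against a probability measure is finite. Everything else is a direct change of measure.
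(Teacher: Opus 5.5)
Your proof is correct and follows the same route as the paper's: Bayes' rule for the single new observation, using $p(\vtheta \mid \D_{t-1}, \vx_t) = p(\vtheta \mid \D_{t-1})$, combined with the importance-sampling change of measure, in the reverse order and with added measure-theoretic care. One minor fix is needed: since $\sigma_n^2$ is itself a component of $\vtheta$, the bound $(2\pi\sigma_n^2)^{-1/2}$ is not uniform in $\vtheta$, so finiteness of the evidence should instead come from properness of the posteriors, because $p(y_t \mid \D_{t-1}, \vx_t)$ equals the ratio of the normalizing constants of $p(\vtheta \mid \D_t)$ and $p(\vtheta \mid \D_{t-1})$.
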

\begin{proof}[Proof sketch]
  We use the importance sampling argument with the previous posterior as the proposal, so the importance weight is the ratio of the new posterior to the old one.
  By Bayes' rule, this ratio is proportional to the likelihood of the new observation under the predictive distribution.
\end{proof}

\vspace{0.5em}

The preceding result implies that we can start computing the expectation w.r.t.\ the posterior $p(\vtheta \mid \gD_t)$ at the time step $t$ \emph{before} we receive the actual result for $y_t$.
In particular, we can already start sampling $\vtheta$ from $p(\vtheta \mid \gD_{t-1})$ of the previous time step $t-1$.
This is the expensive part of this expectation computation due to expensive MCMC.
Indeed, once $y_t$ is obtained, only simple computation of $w(\vtheta)$, $\varphi(\vtheta)$, and averaging are required.

Notice that \cref{prop:w} casts the expectation as importance sampling.
We can thus use self-normalized importance sampling \citep{mcbook} to approximate \(\E_{\vtheta \sim p(\vtheta \mid \D_t)}[\alpha(\vx;\D_t, \vtheta)]\) using MCMC samples \(\{\vtheta^{(m)}\}_{m=1}^{M}\) from \(p(\vtheta \mid \D_{t-1})\) obtained \emph{before} the evaluation is ready and reweighted \emph{after} \(y_t\) is observed:
\begin{equation}
  \label{eq:reweight}
  \alpha(\vx; \D_t) \approx \sum_{m=1}^{M} w^{(m)} \, \alpha(\vx; \D_t, \vtheta^{(m)}), \qquad w^{(m)} = \frac{p(y_t |\D_{t-1}, \vx_t, \vtheta^{(m)})}{\sum_{m'} p(y_t |\D_{t-1}, \vx_t, \vtheta^{(m')})}.
\end{equation}
This moves all MCMC computations to objective evaluation time. Once the objective value is observed, the acquisition function can be computed by reweighting the sampled hyperparameters based on how well each explains the observed value. \Cref{alg:alg} summarizes the complete procedure.
The reweighting strategy in ELF-BO may resemble sequential Monte Carlo~\citep[SMC;][]{chopin2002sequential}, but in ELF-BO, a fresh set of samples is generated at each function evaluation, whereas SMC propagates a single population across all steps.

Importantly, as in all importance sampling algorithms, the approximation is only reliable when the proposal \(p(\vtheta \mid \D_{t-1})\) adequately covers the target \(p(\vtheta \mid \D_t)\).
This can be monitored through the effective sample size of the self-normalizing weights: \(M_\text{eff} = 1/{\sum_{m=1}^{M} (w^{(m)})^2}\), which is available immediately after \(y_t\) is observed.
Values of \(M_\text{eff}\) near \(M\) indicate that the value of \(y_t\) had high probability under a variety of sampled hyperparameters, so the reweighted samples are representative, while low values indicate that \(y_t\) was poorly explained by most of the samples, meaning a few samples dominate and the estimate will collapse to a single point.
In \cref{sec:analysis} we analyze the effective sample size of the method across optimization steps.

\begin{algorithm}[t]
  \caption{ELF-BO}
  \label{alg:idle-mcmc}
  \begin{algorithmic}
    \Require Initial dataset \(\D_0\), acquisition \(\alpha\), number of samples \(M\), evaluation budget \(T\)
    \Ensure Final dataset \(\D_T\)

    \State{}
    
    \State \(\{\vtheta^{(m)}\}_{m=1}^M \gets \texttt{MCMC}(p(\vtheta \mid \D_0))\)
    \State \(w^{(m)} \gets 1/M\)
    \For{\(t = 1, \dots, T\)}
    \State \(\vx_t = \argmax_{\vx \in \gX} \sum_m w^{(m)} \, \alpha(\vx; \D_{t-1}, \vtheta^{(m)})\)
    \State \(h \gets \mathtt{\mathbf{async} \enspace compute}(f, \vx_t)\) \Comment{evaluation of \(f(\vx_t)\) starts}
    \State {\textcolor{idleblue}{\(\{\vtheta^{(m)}\}_{m=1}^M \gets \texttt{MCMC}(p(\vtheta \mid \D_{t-1}))\)}} \Comment{\textcolor{idleblue}{Computed in parallel}}
    \State \(y_t \gets \mathtt{\mathbf{await} \enspace get\_result} (h)\) \Comment{evaluation of $f(\vx_t)$ ends; obtain \(y_t\)}
    \State \(\D_t = \D_{t-1} \cup \{(\vx_t, y_t)\}\)
    \State Compute \(\{w^{(m)}\}\) as in \eqref{eq:reweight}
    \EndFor
  \end{algorithmic}
  \label{alg:alg}
\end{algorithm}

\subsection{Decision Latency Reduction}
\label{sec:cost}

We now analyze the \emph{decision latency}, the wall-clock time elapsed between an observation arriving and the next query being proposed.
This is relevant whenever idle equipment is costly, samples degrade while waiting, or an operator must dispatch the next experiment immediately, prevalent in automated scientific discovery \citep{MacLeod2020selfdrivinglab}.

Let $T_f$ denote the evaluation latency of the objective, $T_{\text{MC}}$ the time to draw $M$ posterior samples, $T_\alpha$ the time to optimize the expected acquisition function, and $T_w$ the time to compute the weights in \eqref{eq:reweight}.
The decision latencies of standard fully Bayesian optimization and ELF-BO are given by
\begin{equation*}
  L_{\text{FBO}} = T_{\text{MC}} + T_\alpha
  \qquad\text{and}\qquad
  L_{\text{ELF-BO}} = \max(0,\, T_{\text{MC}} - T_f)
  + T_w + T_\alpha ,
  \label{eq:latency}
\end{equation*}
respectively.
Therefore, ELF-BO reduces decision latency by
\begin{equation*}
  \Delta L = L_{\text{FBO}} - L_{\text{ELF-BO}} = \min(T_{\text{MC}}, T_f) - T_w .
\end{equation*}

Note that in many real-world applications, \(T_f \geq T_\text{MC}\), so the entire MCMC sampling procedure runs behind the objective evaluation.
In such cases, the reweighting overhead is negligible in practice.
We validate this analysis in \cref{sec:analysis}.

\section{Related Work}

\paragraph{Fully Bayesian Optimization}
Early work on GPs for global optimization considered a fully Bayesian approach to model hyperparameters~\citep{osborne2009gaussian,benassi2011robust}.
For instance, the seminal work of \citet{snoek2012boml} demonstrated that an MCMC approach using slice sampling~\citep{neal2003slice,murray2010slice} outperforms point estimate-based methods.
More recently, \citet{death2021howbayesian} analyzed the benefits of fully Bayesian methods in BO, demonstrating that the MCMC approach is superior to MAP and variational inference methods when using EI and automatic relevance determination kernels.
In the high-dimensional setting, SAASBO \citep{eriksson2021saas} demonstrated the effectiveness of a fully Bayesian method with NUTS~\citep{hoffman2014nuts} and priors that capture sparseness.
Marginalizing out hyperparameters also provides a textbook solution to overfitting and is beneficial in the overparameterized regime of deep kernels~\citep{ober2021promisesdeepkernel}.
Beyond GP surrogates, fully Bayesian BO has also been performed with Bayesian neural networks~\citep{li2024study,kristiadi_promises_2023,kristiadi_sober_2024}, which could equally benefit from ELF-BO.

\paragraph{Asynchronous Batch Bayesian Optimization}
In this setting, multiple function evaluations run in parallel, and, as soon as one of the function evaluations finishes, a new query is launched without waiting for the pending results.
\citet{ginsbourger2011dealing} was the first to work on this problem, using samples from the predictive posterior at the pending location to calculate the expected EI under different objective realizations.
\citet{alvi2019asynchronous} proposed the use of local penalization around the pending point to reduce redundant selections, while \citet{death2021asynchronous} used an $\varepsilon$-greedy algorithm to achieve the same goal.
Alternatively, \citet{kandasamy2018parallelised} proposed the use of Thompson Sampling~\citep{thompson1933sampling} in which a new sample of the posterior is taken after including only the newly revealed observation.
More recently, \citet{riegler2026standardasync} also demonstrated good results across acquisition functions without explicitly modelling the pending evaluations.
On the related problem of stochastically delayed feedback, \citet{verma2022delayed} considered substituting the minimum function value for observations not yet revealed, a pessimistic version of the Kriging Believer method~\citep{Ginsbourger2010}.
ELF-BO complements this line of work by operating in settings with a single function evaluation at a time, a scenario often found in real-world applications.

\begin{figure}[!t]
  \centering
  \includegraphics[width=\linewidth]{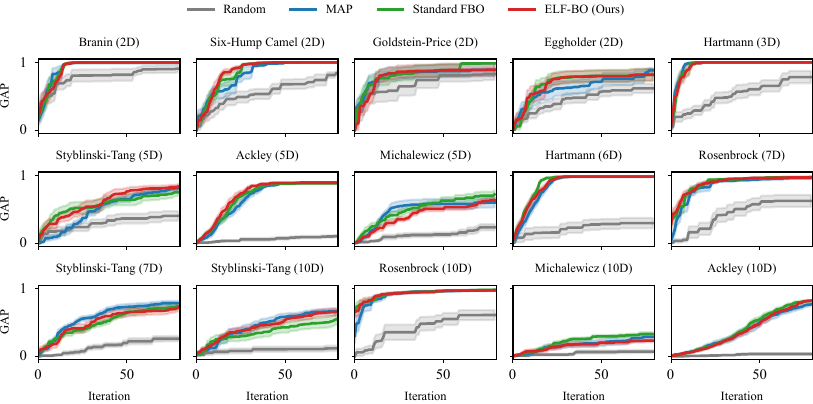}
  \vspace{-1em}
  \caption{\textbf{Standard benchmarks.} Optimization performance of 15 synthetic functions. \elfbo{} consistently follows \mcmc{} on most functions, in many of which the fully Bayesian approaches achieve earlier convergence than \map{}.}
  \label{fig:synth}
\end{figure}

\begin{figure}[!t]
  \centering
  \includegraphics[width=\linewidth]{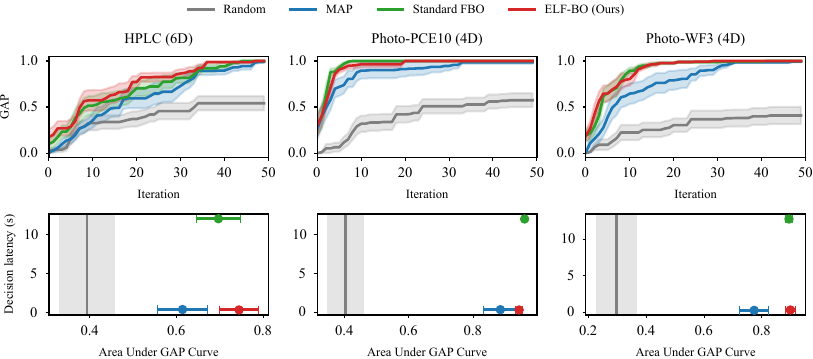}
  \vspace{-1.5em}
  \caption{\textbf{Real-world tasks.} Results on three experimental datasets from the Olympus suite.
  \emph{Top:} best GAP found versus iteration, with standard error shaded. \elfbo{} converges as quickly and reliably as \mcmc{}, while \map{} lags and varies more across seeds.
  \emph{Bottom:} mean decision latency versus area under GAP curve, with error bars showing the standard error over all steps. By sampling while the objective is evaluated, \elfbo{} has area under GAP close to \mcmc{}'s at a latency similar to \map{}'s, more than 10$\times$ lower than \mcmc{}.}
  \label{fig:real}
\end{figure}

\section{Experiments}

We now evaluate the empirical performance of ELF-BO.
Our experiments aim to answer two questions: (i) Does reweighting hyperparameter samples drawn while the oracle is busy match the performance of running MCMC after each observation? (ii) What is the wall-clock time effect of moving sampling under the objective evaluation?

\paragraph{Baselines.}
We compare ELF-BO against random search and two baselines sharing the surrogate and acquisition function, so that differences between methods reflect only how the GP hyperparameters are handled and when the compute is spent.
\random{} search samples uniformly from the domain.
\map{} fits the hyperparameters by maximizing the marginal likelihood plus log prior at every step, which is the default practice.
Standard fully Bayesian optimization (\mcmc{}) runs NUTS on the full dataset after each observation arrives; it is the fully Bayesian gold standard and pays the entire sampling cost between evaluations.
\elfbo{} runs the same NUTS sampler on the data available when a query is dispatched, while the oracle is evaluating, and when the observation arrives, it corrects the draws with a single importance reweighting.

\paragraph{Setup.}
All methods use a GP with a constant mean, homoskedastic Gaussian likelihood, and ARD Matérn-5/2 kernel with output scale.
The lengthscale prior is scaled by dimension as proposed by \citet{hvarfner2024vanilla}.
For the sampling methods, we shift the distribution so that the MAP prior's mode becomes the sampled prior's median, and we reduce the scale from $\sqrt{3}$ to 1.
Unless stated otherwise, Standard FBO and ELF-BO use NUTS~\citep{hoffman2014nuts} with 4 vectorized chains, 512 warmup steps per chain, keeping a total of $M$ samples after thinning by 16 (total of $16M$ samples generated after warmup).
We use a dense mass matrix and a max tree depth of 6.
Log-EI~\citep{ament_unexpected_2024} is used for all methods.
For all experiments, we report the results across 10 trials, with standard errors indicated. 
All methods are implemented in BoTorch~\citep{Balandat2019BoTorchPB}.
Further details on the experimental setup are presented in Appendix~\ref{apdx:exp-details}.

\paragraph{Metrics.}
As different objective functions have different scales, GAP~\citep{barton1984gap} is used to evaluate performance: \(\text{GAP} = (y_i^\text{max} - y_0^\text{max})/(y_* - y_0^\text{max})\), where \(y_i^\text{max}\) is the best observed value at the i-th iteration, \(y_0^\text{max}\) is the highest initial value, and \(y_*\) is the optimal value.
To evaluate the benefits of leveraging objective evaluation time, we also measure the decision latency of each algorithm.

\subsection{Standard Benchmark Functions}

Following \citet{death2021asynchronous}, we use the 15 benchmark functions\footnote{\url{https://www.sfu.ca/~ssurjano/optimization.html}} ranging from $d=2$ to $d=10$: Branin, Eggholder, Goldstein--Price and Six-Hump Camel ($d=2$); Hartmann ($d=3, 6$); Ackley and Michalewicz ($d=5, 10$); Styblinski--Tang ($d=5, 7, 10$); and Rosenbrock ($d=7, 10$).
Each run starts from $2d$ Sobol points and uses a total budget of 100 optimization steps.
The methods that use MCMC take $M=64$ hyperparameter samples per BO step.
We consider noiseless observations.

The results presented in \cref{fig:synth} demonstrate that \elfbo{} tracks \mcmc{} closely on most of the 15 functions, with curves largely overlapping within standard error.
This agreement holds across input dimensions and a range of landscape characteristics, including smooth functions such as Branin and more rugged, highly multimodal functions such as Eggholder.
There are, however, functions for which \elfbo{} and \mcmc{} diverge, such as Michalewicz ($d=5$).
In \Cref{sec:analysis} we analyze this failure through the lens of effective sample size.
Finally, we observe that \elfbo{} follows \mcmc{} both when it outperforms \map{} and when they are equivalent, showing no clear preference for cases where point estimates are ``good enough''.
This matches the goal of the work, which is to provide a small-latency alternative to fully Bayesian optimization.

\begin{figure}[!t]
  \centering
  \includegraphics[width=\linewidth]{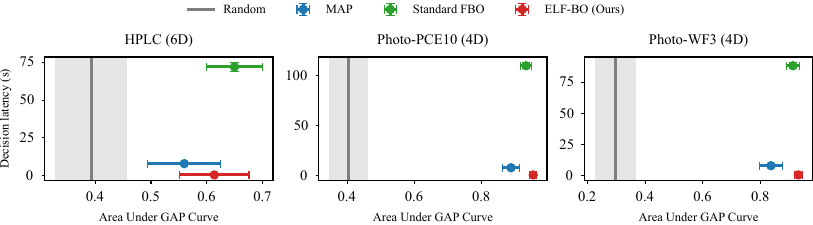}
  \vspace{-1.5em}
  \caption{\textbf{Deep kernel on real-world tasks.} Results of the deep kernel on Olympus datasets. As in experiments with Matérn kernel, \elfbo{} achieves Pareto optimality over \mcmc{} and \map{}, with an even more pronounced reduction in decision latency. See Appendix~\ref{apdx:additional-res} for GAP curves.}
  \label{fig:deep}
\end{figure}

\subsection{Real-World Applications}
\label{sec:real}

In addition, we evaluate ELF-BO on three experimental datasets curated in the Olympus benchmark suite~\citep{hase_olympus_2021}.
Rather than using learned emulators, we directly treat the datasets as a virtual library---thus, querying a point returns its recorded experimental measure.
HPLC ($d=6$ and $|\gX|=1{,}386$) tunes the sampling sequence of a robotic direct-inject HPLC system to maximize peak area \citep{roch2020chemos}.
Photo-PCE10 and Photo-WF3 ($d=4$ and $|\gX|=1{,}040$ each) optimize the composition of polymer blends to minimize\footnote{We negate the objective and consider it a maximization problem.} photodegradation \citep{langner2020beyond}.
We choose the three Olympus datasets with more than 1000 data points to provide a large enough library for BO.
GAP is computed with $y_*$ set to the best measurement in each dataset.

All three datasets were collected through physical experiments.
In a live campaign where Standard FBO chooses each experiment before it is run, every evaluation would take far longer than sampling the hyperparameter posterior, leaving the planner idle for most of the campaign---precisely the setting ELF-BO is designed for.
Although the photodegradation data was collected in a high-throughput setting, we optimize all three tasks sequentially.
We use an initial dataset of \(2d\) Sobol points and a total budget of 50 steps for all tasks.
Standard FBO and ELF-BO use $M=128$ hyperparameter samples per BO step.
More details on the tasks and datasets are provided in Appendix~\ref{apdx:exp-details}.

The optimization curves presented in \cref{fig:real} (top) demonstrate that, across all tasks, \elfbo{} closely tracks \mcmc{} in performance.
The results also highlight that in real-world applications with limited evaluation budget, a fully Bayesian approach can improve sample efficiency, achieving better results in earlier optimization steps.
We analyze the sample efficiency using the area under the GAP curve, which is presented in \cref{fig:real} (bottom) alongside average decision latency.
This demonstrates the Pareto dominance of \elfbo{} over \mcmc{} and \map{}, displaying optimization performance on par with \mcmc{} while providing suggestions more than $10\times$ faster.

\paragraph{Deep kernel.}
A natural use case for a fully Bayesian approach is with deep kernels~\citep{wilson2016deep}, as marginalizing out the network weights can mitigate the overfitting of point estimates in the overparameterized regime~\citep{ober2021promisesdeepkernel}.
In addition, they offer a challenging setting for \elfbo{}, as the sampled parameters include neural-network weights, a step toward BNN surrogates.
We repeat the real-world benchmarks, replacing the Matérn kernel with a deep kernel (two-layer neural network; see Appendix~\ref{apdx:exp-details}).
As shown in \cref{fig:deep}, the decision latency of \mcmc{} grows 5--9$\times$ relative to Matérn, whereas \elfbo{} remains under 1\,s, a 70--110$\times$ reduction, while remaining competitive in sample efficiency.

\section{Analysis}
\label{sec:analysis}

We now analyze ELF-BO under different settings, improving our understanding of the method.
To this end, we study the scalability with respect to dataset size and the impact of effective sample size.

\paragraph{Scaling with the number of observations.}
The cost of fully Bayesian optimization grows with the number of observations \(n\).
However, ELF-BO is unaffected by this growth as long as sampling fits within the evaluation time.
\Cref{fig:scaling} shows the decision latency as a function of the number of observations on Photo-WF3 ($d = 4$) for the deep kernel model in \cref{sec:real}.
While \mcmc{} latency grows up to 24 minutes, \elfbo{} remains nearly constant at under 3 seconds.
This highlights the large gains that can be obtained by ELF-BO when optimization campaigns are long and use more complex models.
The same figure indicates the evaluation time required to fully hide sampling at each step, providing a practical guide for when ELF-BO is applicable.

\paragraph{Impact of effective sample size.}
As noted in \Cref{sec:elf}, ELF-BO depends on how representative the samples collected during the objective evaluation are.
We analyze \(M_\text{eff}\) of Michalewicz ($d = 5$), a function for which ELF-BO has difficulties tracking Standard FBO performance, and Hartmann ($d = 6$), a function close in dimensionality for which ELF-BO performs well.
\Cref{fig:ess} presents the step-wise fraction of effective samples, defined as \(M_\text{eff}/M\).
The effective sample size for Michalewicz ($d = 5$) is lower for most of the optimization steps, although it remains far from collapse for both functions.
The lower values of \(M_\text{eff}\) for Michalewicz ($d = 5$) help explain why ELF-BO performs worse on that function.
Since \(M_\text{eff}\) is available as soon as \(y_t\) is observed, it provides a cheap diagnostic for flagging steps where the reweighted estimate may be less reliable.

\begin{figure}[!t] 
\centering
\begin{minipage}[t]{0.48\linewidth} 
\centering
\includegraphics[width=\linewidth]{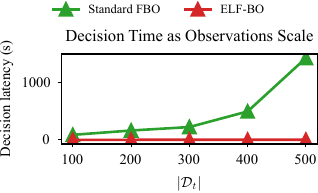}
\vspace{-1.5em}
\caption{\textbf{Deep kernel on larger datasets.} Decision latency on Photo-WF3 ($d = 4$) as observation set grows. \mcmc{} increases with dataset size while \elfbo{} remains constant.}
\label{fig:scaling}
\end{minipage}\hfill 
\begin{minipage}[t]{0.48\linewidth}
\centering
\includegraphics[width=\linewidth]{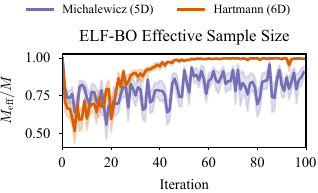}
\vspace{-1.5em} 
\caption{\textbf{Normalized effective sample size per iteration.} Michalewicz ($d = 5$) exhibits a lower effective sample size than Hartmann ($d = 6$) for most of the optimization steps.} 
\label{fig:ess} 
\end{minipage}
\end{figure}
\section{Conclusion}

We introduced ELF-BO, a fully Bayesian optimization algorithm that leverages the objective's execution latency to alleviate the time taken to make a decision on the next point to query.
By drawing samples from the posterior given the data available at dispatch time and reweighting them once the observation is revealed, ELF-BO removes MCMC from the time between function evaluations.
Across 15 synthetic functions and three real-world experimental datasets, ELF-BO matches the optimization performance of MCMC-based fully Bayesian optimization on most tasks while keeping decision latency close to point estimate BO times.

\paragraph{Limitations and Future Work}
ELF-BO's latency gains are observed in applications for which objective function evaluations take more time than MCMC, but are absent in applications where evaluations are near-instant.
Moreover, we only consider GP surrogates in the sequential setting.
Natural extensions include BNN surrogates and asynchronous batch BO with multiple pending evaluations.
More broadly, ELF-BO shows that the planner's idle time, which is inherent to expensive black-box optimization, can be spent on better decisions.

\bibliography{bibliography}
\bibliographystyle{iclr2027_conference}

\clearpage
\appendix
\section{Proofs}
\label{apdx:proofs}
\thmone*

\begin{proof}
  Let $t \in \{ 1, \dots, T \}$ be arbitrary.
  By the importance sampling argument:
  \begin{equation*}
    \begin{split}
      \E_{\vtheta \sim p(\vtheta \mid \D_t)}[\varphi(\vtheta)] &= \int \varphi(\vtheta) \, p(\vtheta \mid \D_t) \, d\vtheta \\
      &= \int \varphi(\vtheta) \underbrace{\frac{p(\vtheta \mid \D_t)}{p(\vtheta \mid \D_{t-1})}}_{=:w(\vtheta)} p(\vtheta \mid \D_{t-1}) \, d\vtheta \\
      &= \E_{\vtheta \sim p(\vtheta \mid \D_{t-1})}[w(\vtheta) \,\varphi(\vtheta)] \, .
    \end{split}
  \end{equation*}
  In particular, the importance weight is $w(\vtheta) = \frac{p(\vtheta \mid \gD_t)}{p(\vtheta \mid \gD_{t-1})}$.
  
  Recall that $\gD_t = \gD_{t-1} \cup \{ (\vx_t, y_t) \}$. 
  Thus $p(\vtheta \mid \gD_t) = p(\vtheta \mid \gD_{t-1}, \vx_t, y_t)$.
  Therefore, by Bayes' rule we have that:
  \begin{equation*}
    \begin{split}
      p(\vtheta \mid \D_t) &= p(\vtheta \mid \D_{t-1}, \vx_t, y_t) \\
      &= \frac{p(y_t \mid \D_{t-1}, \vx_t, \vtheta) \, p(\vtheta \mid \D_{t-1}, \vx_t)}{p(y_t \mid \D_{t-1}, \vx_t)} \\
      &= \frac{p(y_t \mid \D_{t-1}, \vx_t, \vtheta) \, p(\vtheta \mid \D_{t-1})}{p(y_t \mid \D_{t-1}, \vx_t)} \, .
    \end{split}
  \end{equation*}
  Therefore, plugging this into \(w(\vtheta)\) we get:
  \begin{equation*}
    \begin{split}
      w(\vtheta) &= \frac{p(\vtheta \mid \D_t)}{p(\vtheta \mid \D_{t-1})} \\
      &= \frac{p(y_t \mid \D_{t-1}, \vx_t, \vtheta) \, p(\vtheta \mid \D_{t-1})}{p(y_t \mid \D_{t-1}, \vx_t)} \frac{1}{p(\vtheta \mid \D_{t-1})} \\
      &= \frac{p(y_t \mid \D_{t-1}, \vx_t, \vtheta)}{p(y_t \mid \D_{t-1}, \vx_t)} \\
      &\propto {p(y_t \mid \D_{t-1}, \vx_t, \vtheta)},
    \end{split}
  \end{equation*}
  where the last proportionality is because the denominator is constant w.r.t.\ $\vtheta$.
\end{proof}

\section{Experimental Details}
\label{apdx:exp-details}

\subsection{Real-World Applications}

\paragraph{HPLC.} 
The dataset originates from the autonomous calibration of a robotic sampling sequence for direct-inject HPLC analysis~\citep{roch2020chemos}.
The authors report sessions of 100 autonomous experiments over approximately seven hours, corresponding to roughly four minutes per evaluation.

\paragraph{Photo-PCE10 and Photo-WF3.} These datasets report the photostability of photovoltaic blends~\citep{langner2020beyond}.
The experimental time reported by the authors is 18 hours for a batch of samples.
As mentioned in the main text, we consider the problem in the sequential (non-batch) optimization setting.

\subsection{Deep Kernel Details}

The deep kernel is implemented as \(\sigma_f \cdot k(\gE(\vx), \gE(\vx'))\), where \(k\) is a Matérn kernel with lengthscales fixed at 1, \(\sigma_f\) is a kernel outputscale, and  \(\gE : \R^d \to \R^8\) is a neural network with a single hidden layer of dimension 16 and \(\tanh\) activations.
Taking into consideration the observation noise and constant mean, the deep kernel has a total of \(16d + 155\) hyperparameters.
To give NUTS a proper posterior to sample and prevent overfitting in MAP, we impose a zero-mean normal prior with variance 0.25 for all the neural network's biases and $4/L_{in}$ for its weights, where $L_{in}$ is the dimension of the layer's input.

For the experiments in \cref{sec:real}, the MAP deep kernel is optimized using 500 iterations of Adam~\citep{kingma2015adam} per restart, with 4 restarts and a learning rate of 0.01.
For Standard FBO and ELF-BO, samples are obtained using NUTS with a diagonal mass matrix, performing 512 warmup steps and 4096 samples, resulting in \(M=256\) after thinning by 16, with a max tree depth of 10.
For the scaling experiments in \cref{sec:analysis} we use \(M=128\) and the Log-EI is evaluated on batches of 32 candidates at a time.

\subsection{Computational Resources}
Experiments are implemented in Python using the BoTorch \citep{Balandat2019BoTorchPB} library version 0.18.1.
All synthetic benchmarks are executed on a single NVIDIA L40S, while real-world experiments make use of an NVIDIA H100.

\subsection{Further Experimental Details}

A throwaway NUTS run happens before the first BO step, so that device start-up is not considered in latency results.

\paragraph{Acquisition function optimization}
For the continuous synthetic functions, we use 10 restarts, with 512 raw samples. For the discrete real-world problems, every unused row is scored, 512 rows at a time.
\section{Additional Results}
\label{apdx:additional-res}

\begin{figure}[!t]
  \centering
  \includegraphics[width=\linewidth]{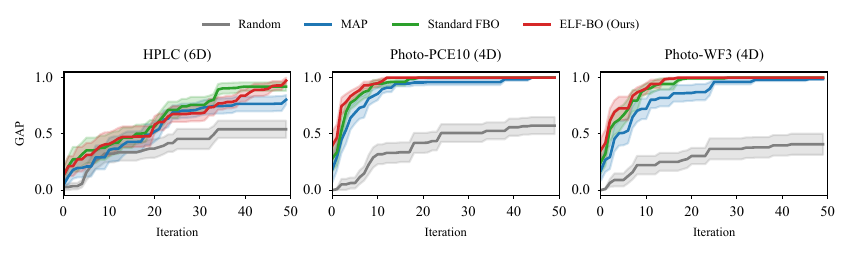}
  \vspace{-1.5em}
  \caption{\textbf{Deep kernel on real-world tasks.} Optimization curves of the deep kernel methods on Olympus datasets.}
  \label{fig:deep-gap}
\end{figure}

\paragraph{Deep Kernel} \Cref{fig:deep-gap} presents the complete GAP curves for the deep kernel experiments described in \cref{sec:real}.

\end{document}